\documentclass{ifacconf}

\usepackage{graphicx}      
\usepackage{natbib}        
\usepackage{amsmath}
\usepackage{amssymb}
\usepackage{xcolor}
\usepackage{xfrac}
\usepackage{algorithm}
\usepackage{algpseudocode}
\usepackage[T1]{fontenc}
\usepackage[makeroom]{cancel}

\newtheorem{define}{Definition}

\newtheorem{assumpt}{Assumption}

\newcommand{\setdef}[2]{  \left\{  #1\left| #2 \right. \right\} }
\newcommand{\setint}[1]{\textrm{int}\left( #1 \right)}
\begin{document}
\begin{frontmatter}

\title{Feasibility-aware Hybrid Control for Motion Planning under Signal Temporal Logics} 

\thanks[footnoteinfo]{
This work was supported by the ERC Consolidator Grant LEAFHOUND, the Swedish Research Council, Swedish Foundation for Strategic Research, and the Knut and Alice Wallenberg Foundation.
}

\author[KTH1]{Panagiotis Rousseas} 
\author[KTH1]{Dimos V. Dimarogonas} 

\address[KTH1]{KTH Royal Institute of Technology, School of Electrical Engineering and Computer Science, Division of Decision and Control Systems (e-mails: rousseas@kth.se, dimos@kth.se).}

\begin{abstract}                
In this work, a novel method for planar task and motion planning based on hybrid modeling is proposed. By virtue of a discrete variable which models local constraint satisfaction and enables local feasibility analysis, the proposed control architecture unifies planning with control design. Concurrently, control barrier functions are designed on a transformed disk version of the original nonconvex and geometrically complex robotic workspace, thus amending the issue of deadlocks. Simulations of the proposed method indicate effective handling of multiple overlapping spatio-temporal tasks even in the face of input saturation.  
\end{abstract}

\begin{keyword}
Event-based control, Task and motion planning, Control barrier functions and state space constraints, Spatio-temporal constraints, Hybrid and switched systems modeling
\end{keyword}

\end{frontmatter}

\section{Introduction}
Task planning problems have become increasingly relevant in recent years. Systems nowadays are not only required to carry out a prespecified series of tasks successfully, but importantly to plan for which tasks they may perform and when to execute them. This necessitates merging two fundamentally different paradigms, namely low-level continuous control with high-level discrete decision-making. Towards this direction, we propose a novel hybrid scheme where a continuous, simplified robot model is combined with a discrete variable that encodes which task-related constraints the robot is obeying at each time instance. Even though the robot model is simple, the system's workspace is non-convex while crucially, the proposed method is based on feasibility analysis that enables satisfying multiple overlapping constraints.    
\par 
Motion planning usually refers to finding kinodynamically feasible paths in the presence of obstacles, while task planning refers to designing a plan for executing a sequence of tasks, usually described through spatio-temporal constraints. Unifying the two through combined task and motion planning has been treated extensively in the literature (e.g., a comprehensive review is available by \cite{10705419}). An important aspect of task planning rests on formally representing tasks, where languages such as Linear Temporal Logic (LTL) and Signal Temporal Logic (STL) (\cite{stl_paper}) have been widely adopted. LTL and STL enable constructing metrics, e.g., see robustness metrics by \cite{10.1145/3501710.3519504,FAINEKOS20094262}, which can be used to quantify the degree of accomplishment of a task. In this context, numerous approaches have emerged, from sampling-based methods optimizing temporal robustness (\cite{Linard2023RealTimeRW}) or building spatiotemporal trees (\cite{10156470}), bezier curve-based planning (\cite{10644216}) and navigation function methods \citep{10.3389/frobt.2022.782783}.
\par 
An approach to satisfy STL constraints rests on forward-invariant-set methods, which have seen extensive development based on Control Barrier Functions (CBFs) (\cite{cbfqp}). CBFs can provide conditions for control synthesis that ensures forward invariance, and when designed correctly, such sets correspond to STL-task satisfaction (\cite{8404080}). Nevertheless, finding such CBFs is an open challenge for the general case, especially in the context of motion planning where nonconvex workspaces, internal obstacles and input bounds restrict robotic motion. As highlighted by \cite{8404080}, multiple STL constraints and input constraints may render control synthesis infeasible; hence feasibility analysis appears as necessary. Usually, multiple STL-CBFs are combined in a single CBF, without a-priori ensuring feasibility. Examples include (\cite{8404080}) as well as Model Predictive Control formulations (\cite{10421775}), where minimum violating solutions are proposed and prescribed performance control (\cite{10818715}) where the constraints can be relaxed to satisfy input bounds. Similar to our work, \cite{LINDEMANN2019284,8718798} propose a hybrid ``repair mechanism'' that also modifies STL-related specifications based on the system's input limitations.
\par 
An important aspect of the above works is that all of the STL-CBFs are combined into a single criterion; therefore, distinguishing between which tasks are actually performed at each time instance is ambiguous, which abstracts away the discrete aspect of task planning. This work aims at accounting for both aspects through a hybrid model, where an additional state variable is introduced to model exactly which tasks are attempted at a given time, by keeping STL-CBF constraints separate and handling conjunctions through a constraint-matrix formulation. This formulation lends itself to direct (local) feasibility analysis for the proposed CBF Quadratic Programming (CBF-QP) controller proposed by \cite{rousseas2025feasibilityevaluationquadraticprograms,lee2025constraintselectionoptimizationbasedcontrollers} enabling a search for feasible task execution. In terms of the motion planning aspect, we propose a novel CBF design that relies on a homeomorphic transformation of the robotic workspace to the disk (\cite{vlantis}), thus avoiding deadlocks that may appear through naive STL-CBF design in nonconvex workspaces. This bares resemblance to the navigation function approach by \cite{10.3389/frobt.2022.782783}, with the main advantage that we treat generic workspaces and not disk ones. The effectiveness of the proposed method is validated in simulations with and without input saturation and is compared to a constraint relaxation method, highlighting its ability for effectively tackling planar motion and task planning. 
\par 
In summary, the main contributions of this work are: 1) the definition of STL predicate functions on a transformed version of the initial planar robot workspace, which enables avoiding deadlocks due to workspace non-convexity; 2) a hybrid dynamical model that combines state feedback control with discrete decision making for task planning; and 3) a feasibility-informed control architecture that {\color{black}is able to detect mutually infeasible overlapping tasks and provides a decision-making mechanism to nevertheless satisfy them. This mechanism also improves the performance of the controller under input constraints.}
\section{Problem Formulation}
Consider a point robot, operating within a two-dimensional, connected planar workspace $\mathcal{G}\subset\mathbb{R}^2$ with $I\in\mathbb{N}$ inner distinct obstacles $\mathcal{O}_i \subset \mathcal{G},i\in\{1,\cdots,I\}\triangleq\mathcal{I}$. The robot's workspace is given by $\mathcal{W} = \mathcal{G}/\bigcup_{i\in\mathcal{I}}\mathcal{O}_i$ with boundary $\partial\mathcal{W} = \partial\mathcal{G}\bigcup_{i\in\mathcal{I}}\partial\mathcal{O}_i$. In this work, the single integrator dynamics are adopted\footnote{\color{black}Our method can be extended to more complex dynamical systems, e.g., double integrator, via lower level controllers or high-order CBFs.}:
\begin{equation}\label{eq:dynamics}
    \dot{p} = u(t),\quad  p(t = 0) = \bar{p}\in\setint{\mathcal{W}},
\end{equation}
where $p\in\setint{\mathcal{W}}$ denotes the robot's position, $\bar{p}$ denotes the robot's initial position and $u:\mathbb{R}_{\geq 0}\rightarrow\mathbb{R}^2$ denotes the robot's velocity and is considered as an input to be designed. We denote $p:\mathbb{R}_{\geq 0}\rightarrow\setint{\mathcal{W}}$ the solution to \eqref{eq:dynamics} (trajectory) from the initial condition $\bar{p}$ under the input signal $u$ with an abuse of notation, where the distinction between vector/time-signal will be clear in-context. A state-feedback form for $u$ is also considered through an abuse of notation as $u:\mathcal{W}\times\mathbb{R}_{\geq0}\rightarrow\mathbb{R}^2$, which can be viewed as a time signal through the solution of \eqref{eq:dynamics} under $u$ via the mapping $t\mapsto u\left( p(t),t\right)$ (the distinction will be clear in context).
Further consider that the input is constrained to belong inside a polytopic set $u(t) \in \mathcal{U}, \forall t \in \mathbb{R}_{\geq 0}$, where:
\begin{equation}\label{eq:input_constr}
    \mathcal{U} \triangleq \setdef{u\in\mathbb{R}^2}{A_s^\top u \geq B_s} \neq \emptyset,
\end{equation}
where $A_s\in\mathbb{R}^{2\times M}, B_s\in\mathbb{R}^M$ encode $M\in\mathbb{N}$ constraints such that $\mathcal{U}$ is nonempty. An admissible input is defined as follows:
\begin{define}\label{def:adm_ctrl}
    An input $u:\mathbb{R}_{\geq 0}\rightarrow\mathbb{R}^2$ is defined as admissible if $u(t)\in \mathcal{U}$, $\forall t\in\mathbb{R}_{\geq 0}$.
\end{define}
Consider now an STL (\cite{stl_paper}) predicate $\mu$ and its associated predicate function $h:\mathcal{W}\rightarrow\mathbb{R}$ as:
\begin{equation}\label{eq:pred_def}
    \mu :=
    \begin{cases}
        \mathrm{True}, &\ h(p)\geq 0\\
        \mathrm{False}, &\ h(p)< 0
    \end{cases}.
\end{equation}
The STL syntax defines an STL formula $\phi$ and is given by 
\[
\phi ::= \text{True} \mid \mu \mid \neg\phi \mid \phi_1 \land \phi_2 \mid \phi_1 \, U_{[a,b]} \, \phi_2 ,
\]
where $\phi_1, \phi_2$ are STL formulas and $a, b \in \mathbb{R}_{\ge 0}$ with $a \le b$.  
The satisfaction relation $(p,t) \models \phi$ denotes if the signal $p(t)$, satisfies $\phi$ at time $t$.
\begin{define}\label{def:stl}
For a signal $p : \mathbb{R}_{\ge 0} \rightarrow\mathcal{W}$, the STL semantics are recursively given by:
\[
\begin{aligned}
&(p,t) \models \mu 
\iff h(p(t)) \ge 0, \\
&(p,t) \models \neg\phi 
\iff \neg((p,t)\models\phi), \\
&(p,t) \models \phi_1 \land \phi_2 
\iff (p,t)\models\phi_1 \land (p,t)\models\phi_2, \\
&(p,t) \models \phi_1 U_{[a,b]} \phi_2 
\iff \exists t_1 \in [t+a, t+b] \; \\
&\qquad \text{s.t.}\; (p,t_1)\models\phi_2 \land \;\forall t_2 \in [t,t_1], (p,t_2)\models\phi_1, \\
&(p,t) \models F_{[a,b]}\phi 
\iff \exists t_1 \in [t+a, t+b] \;\text{s.t.}\; (p,t_1)\models\phi, \\
&(p,t) \models G_{[a,b]}\phi 
\iff \forall t_1 \in [t+a, t+b], (p,t_1)\models\phi.
\end{aligned}
\]
\end{define}
More specifically, $F_{[a,b]}$ is termed the ``eventually'' operator, $G_{[a,b]}$ is termed the ``always'' operator and $U_{[a,b]}$ is termed the ``until'' operator. In this work we consider these ``simple'' operators with their negations and importantly their conjunction (``and'' operator). Finally, we make the following assumption for the predicate functions $h$:
\begin{assumpt}\label{assum:pred_con}
    A predicate function $h:\mathcal{W}\rightarrow\mathbb{R}$ associated with a predicate $\mu$ \eqref{eq:pred_def} is twice continuously differentiable and its zero super-level set, defined as $\mathcal{H} = \setdef{p\in\mathcal{W}}{h(p)\geq 0}$ is closed and simply-connected.  
\end{assumpt}
\par 
\textbf{Problem:} Consider the robot model \eqref{eq:dynamics} operating within $\mathcal{W}$ and a collection of $J\in\mathbb{N}$ tasks encoded through STL predicates --see Def. \ref{def:stl}-- $\left\{\mu_1,\mu_2,\cdots,\mu_J \right\}$ and the corresponding set of predicate functions $\left\{ h_1,h_2,\cdots,h_J\right\}, h_j:\mathcal{W}\rightarrow \mathbb{R}, \forall j\in\left\{1,\cdots,J\right\}$. The goal is to formulate an \textbf{admissible} control law per Def. \ref{def:adm_ctrl}, i.e., a mapping $(p,t)\mapsto k\left( p,t\right)\in\mathcal{U}$ that satisfies as many of the aforementioned STL-encoded tasks as possible for any $p(0) = \bar{p}\in\mathcal{W}$. 


\section{Proposed Method}\label{sec:prop_meth}
\subsection{Harmonic Maps}
In order to amend deadlocks that may arise inside non-convex workspaces, we employ a harmonic, homeomorphic map (\cite{vlantis}), denoted as $\mathcal{T}:\mathcal{W}\rightarrow\mathcal{D}$ that transforms an arbitrary multiply connected workspace $\mathcal{W}$ into the punctured unit disk $\mathcal{D} \subseteq \setdef{x\in\mathbb{R}^2}{\|x\|\leq 1}$:
\begin{equation}\label{eq:harm_tf}
    \mathcal{D} \ni q = \mathcal{T}(p),\ \forall p \in \setint{\mathcal{W}}.
\end{equation}
The outer boundary $\partial\mathcal{G}$ is mapped onto the unit circle, while the inner obstacles are mapped to points inside $\mathcal{D}$. Furthermore, the dynamics \eqref{eq:dynamics} can be expressed in $\mathcal{D}$ through the transformation's Jacobian: {\color{black}$\dot{q} = \mathcal{J}(p)\dot{p} = \mathcal{J}(p)u \triangleq v$,}
where $\mathcal{J} = \left[ \tfrac{\partial q_i}{\partial p_j}\right]:\mathcal{W}\mapsto \mathbb{R}^{2\times 2},i,j\in\{1,2\}$ is always full rank since $\mathcal{T}$ is a homeomorphism and $v\in\mathbb{R}^2$ is a virtual input. This enables designing control laws with desired properties for the virtual input $v$ on the disk, and mapping the virtual input back to the robot's velocity via the inverse of the Jacobian (which is always invertible owing to $\mathcal{T}$ being a homeomorphism): 
\begin{equation}\label{eq:v_to_u}
    u = \mathcal{J}^{-1}(p)v.
\end{equation}
This homeomorphism is employed to construct STL-based CBFs for control design in non-convex workspaces. First, given a predicate $h:\mathcal{W}\rightarrow\mathbb{R}$ consider the mapping of its zero super-level set onto the disk:
\begin{equation}\label{eq:cbf_tf_zr_lvl_set}
    \mathcal{H}_T \triangleq \setdef{q\in\mathcal{D}}{ h \left( \mathcal{T}^{-1}(q) \right) \geq 0},
\end{equation}
where $ \mathcal{T}^{-1}:\mathcal{D} \rightarrow\mathcal{W}$ denotes the inverse of $\mathcal{T}$. Since $\mathcal{T}$ is a homeomorphism, and since by Assum. \ref{assum:pred_con} $\mathcal{H}$ is closed and connected, $\mathcal{H}_T $ is also closed and connected. Therefore, we propose the following transformed predicate $h_T:\mathcal{D}\rightarrow\mathbb{R}$ defined on the disk: 
\begin{equation}\label{eq:pred_disk}
\color{black}
    h_T(q) = 
    \begin{cases}
        -\underset{\tilde{q} \in \partial \mathcal{H}_T}{\min} 
        \left\{ 
            \|q - \tilde{q}\|
        \right\}, & q\notin \mathcal{H}_T \\
        +\underset{\tilde{q} \in \partial \mathcal{H}_T}{\min} 
        \left\{ 
            \|q - \tilde{q}\|
        \right\}, & q\in \mathcal{H}_T
    \end{cases}.
\end{equation}
Importantly for the sequel, note that closure and connectivity of $\mathcal{H},\mathcal{H}_T$ imply that for $p \in \mathcal{W},q\in \mathcal{D}$ such that {\color{black}$q = \mathcal{T}(p)$}: $p \in \mathcal{H} \iff q\in\mathcal{H}_T$, or equivalently, $h(p)\geq 0 \iff h_T(q)\geq 0$. Therefore, in order to satisfy the associated predicate $\mu$ on $\mathcal{W}$, it suffices to ensure that $h_T\geq 0$, which enables control design on $\mathcal{D}$, avoiding deadlocks due to non-convexity of $\mathcal{W}$.

\subsection{STL-based CBFs}\label{sec:stl_cbfs}
Given two time instances $t_0,t_1 \in \mathbb{R}_{\geq 0}$ such that $t_0<t_1$, consider the differentiable function $b:\mathcal{D}\times [t_0,t_1] \rightarrow\mathbb{R}$ and the set:
\begin{equation}\label{eq:cbf_set}
    \mathcal{B}(t) \triangleq \left\{q\in\mathcal{D}|b(q,t)\geq 0 \right\}.
\end{equation}
\begin{define}
A differentiable function ${b} : \mathcal{D} \times [t_0, t_1] \to \mathbb{R}$ is a candidate CBF if for each
${q}_0 \in \mathcal{B}(t_0)$, there exists an absolutely continuous function
${q} : [t_0, t_1] \to \mathbb{R}^n$ with ${q}(t_0) := {q}_0$ such that
${q}(t) \in \mathcal{B}(t)$ for all $t \in [t_0, t_1]$.
\end{define} 
According to \cite{8404080}, candidate CBFs can be designed based on STL predicates \eqref{eq:pred_def} to ensure the satisfaction of the latter. Similar to previous works by \cite{wiltz2025uniformlytimevaryingcontrolbarrier}, the CBFs assume the following general form: \( b\left( q,t\right) = \sigma(t) \left( h_T(q) + \gamma\left( z,t \right) \right),\)
where $h_T$ denotes a transformed STL predicate \eqref{eq:pred_disk}, the function $\gamma:\mathcal{D}\times \mathbb{R}_{\geq 0}\rightarrow\mathbb{R}$ is a state and time-dependent modifier designed such that forward invariance of the STL-CBF set \eqref{eq:cbf_set} guarantees satisfaction of the corresponding predicate and $\sigma:\mathbb{R}_{\geq 0}\rightarrow[0,1]$ is a switching function to be defined in the sequel. The argument $z\in\mathcal{D}$ is an additional parameter that is designed to ensure forward invariance of the predicate super-level set. In this work we propose the following STL-CBFs:

\subsubsection{\textbf{Eventually Operator}}
The STL-CBF for the eventually operator $F_{[t_0,t_1]}\mu$ is denoted by $b^F_{[t_0,t_1]}:\mathcal{D}\times\mathbb{R}_{\geq 0}\rightarrow \mathbb{R}$ and is designed as:
\begin{equation}\label{eq:eventually_cbf}
    b^F_{[t_0,t_1]}(q,t) \triangleq \sigma^{t_0,t_1}_{\delta}(t) 
    h_T(q),
\end{equation}
where $\sigma^{t_0,t_1}_{\delta}:\mathbb{R}_{\geq 0}\rightarrow[0,1]$ for some $0<\delta<t_1-t_0$ is given by --see Fig. \ref{fig:sigma_01}--:
\begin{equation}\label{eq:step_fun_0_1}
    \sigma^{t_0,t_1}_{\delta}(t) \triangleq \sigma_{t_1,\delta}(t)\left( 1 - \sigma_{t_0,\delta}(t)\right), 
\end{equation}
for $\sigma_{\tau,\delta}:\mathbb{R}_{\geq 0}\rightarrow [0,1]$:
\begin{equation}\label{eq:step_fun_0}
    \sigma_{\tau,\delta}(t) = 
    \begin{cases}
        1,&\quad t \leq \tau  - \delta \\
        \exp
            \left( 
                -\left[
                    \tfrac{t - \left( \tau - \delta \right)}{t - \tau}
                \right]^2
            \right),&\quad \tau - \delta < t < \tau \\
        0, &\quad t \geq \tau
    \end{cases},\notag 
\end{equation}
\begin{figure}
    \centering
    \includegraphics[trim= 0.95cm 0.cm 1cm 0cm,clip,width=0.8\linewidth]{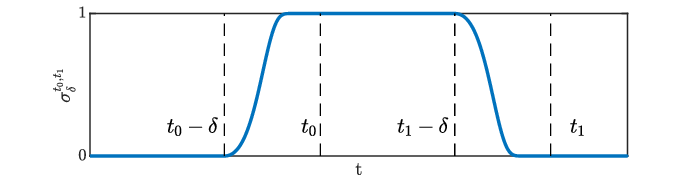}
    \caption{Switching function \eqref{eq:step_fun_0_1}.}
    \label{fig:sigma_01}
\end{figure}
where $\sigma_{\tau,\delta}$ serves as a smooth switching function that appropriately nullifies the corresponding STL-CBFs based on their time-window. 

\subsubsection{\textbf{Always Operator}}
The STL-CBF for the always operator $G_{[t_0,t_1]}\mu$ is denoted by $b^G_{[t_0,t_1]}:\mathcal{D}\times\mathbb{R}_{\geq 0}\rightarrow \mathbb{R}$ and is designed as:
\begin{equation}\label{eq:always_cbf}
    b^G_{[t_0,t_1]}(q,t) \triangleq 
    \sigma_{t_1,\delta}(t) \cdot  
    \left(
        h_T(q) + \gamma^G_\delta(z,t)
    \right),
\end{equation}
for $z = \mathcal{T}(\hat{p})$, for some $\hat{p}\in\mathcal{W}$, where $\gamma^G_\delta:\mathcal{D}\times \mathbb{R}_{\geq 0}\rightarrow\mathbb{R}$ is --See Fig. \ref{fig:gamma_G}--: $\gamma^G_\delta(z,t) =
         a \cdot\exp\left(   \frac{t}{t_0}\ln\left( 1 + \frac{h_T(z)}{a}\right)\right) 
        - \left( a  + h_T(z)\right)$,
and $a \in \mathbb{R}_{\geq 0}$ is chosen such that: $0 < 1 + \tfrac{h_T(z)}{a} < 1$ (e.g., $a = -2 h_T(z)$). Additionally, $z$ can be chosen as the initial transformed position $\bar{q} = \mathcal{T}(\bar{p})$, (we remind the reader that $\bar{p}$ is the initial robot position --see Eq. \eqref{eq:dynamics}--) but other choices will be investigated in the future.  
\begin{figure}
    \centering
    \includegraphics[trim= 1.2cm 0cm 1.2cm 0cm,clip,width=0.8\linewidth]{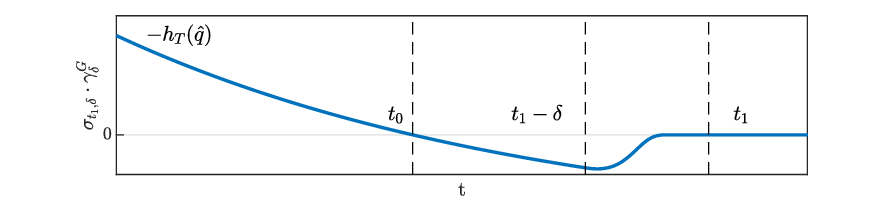}
    \caption{Function $\sigma_{t_1,\delta}(t) \cdot\gamma^G_\delta(z,t)$ for $a = -2 h_T(z)$.}
    \label{fig:gamma_G}
\end{figure}

\subsubsection{\textbf{Until Operator}}
The ``until'' $\mu_1 U_{[t_0,t_1]}\mu_2$ operator can be synthesized by combining an always and an eventually operator --see \cite{8404080}--.
\par 
{\color{black}We are now ready to prove that ensuring forward invariance of the zero super-level sets \eqref{eq:cbf_set} of the above STL-based CBFs \eqref{eq:eventually_cbf}, \eqref{eq:always_cbf}, ensures satisfaction of the corresponding predicates.}

\begin{thm}\label{thm:until}
    Given a predicate $\mu$, its corresponding predicate function $h:\mathcal{W}\rightarrow\mathbb{R}$, the eventually operator $F_{[t_0,t_1]}\mu$ and the STL-CBF \eqref{eq:eventually_cbf}, if $\exists u \in \mathcal{U}$ such that $\mathcal{B}^F(t) \triangleq \left\{q\in\mathcal{D}|b^F_{[t_0,t_1]}(q,t)\geq 0 \right\}$ is forward invariant, then the corresponding predicate is satisfied. 
\end{thm}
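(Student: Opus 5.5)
The approach is to read off the sign of the switching factor $\sigma^{t_0,t_1}_{\delta}$ from \eqref{eq:step_fun_0_1}, use forward invariance to conclude $h_T(q(t))\geq 0$ wherever that factor is strictly positive, and then transfer this to $h$ on $\mathcal{W}$ via the homeomorphism $\mathcal{T}$.

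\textbf{Step 1: where the switching factor is positive.} From \eqref{eq:step_fun_0}, $\sigma_{\tau,\delta}(t)>0$ for $t<\tau$ and $\sigma_{\tau,\delta}(t)=0$ for $t\geq\tau$. Likewise $1-\sigma_{\tau,\delta}(t)>0$ exactly for $t>\tau-\delta$. Hence $\sigma^{t_0,t_1}_{\delta}(t)=\sigma_{t_1,\delta}(t)\left(1-\sigma_{t_0,\delta}(t)\right)$ is strictly positive on the open interval $(t_0-\delta,t_1)$ and vanishes elsewhere. Since $\delta<t_1-t_0$ and $\delta>0$, this interval contains a point $t^\star\in[t_0,t_1)$; for instance $t^\star=t_0$ works, because $1-\sigma_{t_0,\delta}(t_0)=1$ and $\sigma_{t_1,\delta}(t_0)=1$ when $t_0\leq t_1-\delta$.

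\textbf{Step 2: apply forward invariance.} Let $q(t)=\mathcal{T}(p(t))$ be the transformed trajectory under the admissible input $u=\mathcal{J}^{-1}v$ from \eqref{eq:v_to_u}. The hypothesis is that $\mathcal{B}^F(t)$ is forward invariant along this trajectory, so $b^F_{[t_0,t_1]}(q(t),t)\geq 0$ for all $t$. At $t^\star$ we have $\sigma^{t_0,t_1}_{\delta}(t^\star)>0$, so dividing gives $h_T(q(t^\star))\geq 0$. The same argument gives $h_T(q(t))\geq 0$ for every $t\in[t_0,t_1)$.

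\textbf{Step 3: pull back to $\mathcal{W}$.} By the equivalence $h(p)\geq 0\iff h_T(\mathcal{T}(p))\geq 0$ established after \eqref{eq:pred_disk}, we get $h(p(t^\star))\geq 0$. Therefore $(p,0)\models F_{[t_0,t_1]}\mu$ by Def. \ref{def:stl}, with witness $t_1'=t^\star\in[t_0,t_1]$.

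\textbf{Main obstacle: what the hypothesis actually guarantees.} The set $\mathcal{B}^F(t)$ equals all of $\mathcal{D}$ wherever $\sigma^{t_0,t_1}_{\delta}=0$, in particular for $t\leq t_0-\delta$. So forward invariance read literally from time $0$ is vacuous, and one cannot argue that the state is carried into $\mathcal{H}_T$. The statement has to be read as: the trajectory satisfies $b^F\geq 0$ for all $t\geq 0$, in particular at times where $\sigma^{t_0,t_1}_{\delta}>0$. Equivalently, $q(t)\in\mathcal{B}^F(t)$ on the whole window, in line with the candidate-CBF definition on $[t_0,t_1]$.

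I will state this interpretation explicitly in the proof. If the intended meaning is only invariance once the state is inside, I will add the remark that the hypothesis must include $q(t_0)\in\mathcal{B}^F(t_0)$; since $\sigma>0$ at $t_0$, this initial-condition requirement is essentially the whole content of the claim. Under this reading the argument reduces to the sign analysis in Steps 1 to 3.
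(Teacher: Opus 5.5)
Your Steps 1--3 follow the paper's argument. The paper likewise uses that $\sigma^{t_0,t_1}_{\delta}\equiv 1$ on $[t_0,t_1-\delta]$, so that $b^F_{[t_0,t_1]}=h_T$ there. It concludes $h_T(q(\tau))\geq 0$ from forward invariance and pulls this back to $h(p(\tau))\geq 0$ through $\mathcal{T}$. Your positivity analysis on $(t_0-\delta,t_1)$ is a slightly sharper form of the same observation.

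Your ``main obstacle'' paragraph, however, gets the logic backwards, and it obscures the one step the paper makes explicit.

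\begin{itemize}
\item Forward invariance is the implication $q(0)\in\mathcal{B}^F(0)\Rightarrow q(t)\in\mathcal{B}^F(t)$ for all $t\geq 0$.
\item That $b^F_{[t_0,t_1]}(\cdot,\tau)\equiv 0$ for $\tau\in[0,t_0-\delta]$ makes the \emph{premise} automatic. This is exactly how the paper's proof opens: it checks $\bar q=q(0)\in\mathcal{B}^F(0)$.
\item The conclusion is then far from vacuous. It forces $q(t)\in\mathcal{H}_T$ for all $t\in(t_0-\delta,t_1)$.
\end{itemize}

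So the reading you use in Step 2 is a consequence of the hypothesis, not an interpretation you must impose. No extra assumption such as $q(t_0)\in\mathcal{B}^F(t_0)$ is needed, because that membership is derived.

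Your unease is better aimed at the design than at the proof. The CBF constraint is trivially satisfied on $[0,t_0-\delta]$, and $\mathcal{H}_T$ is closed. Hence invariance from $t=0$ in fact requires $q(t_0-\delta)\in\mathcal{H}_T$ by continuity. The hypothesis is therefore strong, and the CBF-QP does nothing to help achieve it. That concerns the controller design, though, not this purely conditional theorem. Replace the hedge with the one-line check $\bar q\in\mathcal{B}^F(0)$, and your proof is complete.
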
 
\begin{pf}
    Let $\bar{p}\in\mathcal{W}$ denote the initial position of the robot, with its transformed equivalent $\bar{q} = \mathcal{T}(\bar{p})$. Then, $\forall \tau \in [0,t_0 - \delta]$:
    \(
            b^F_{[t_0,t_1]}\left( q(\tau),\tau \right) 
            = \sigma^{t_0,t_1}_{\delta}(\tau)
            h_T\left(  q(\tau) \right)= 0,
    \)
     since $\sigma^{t_0,t_1}_{\delta}(\tau) = 0, \forall \tau \in [0,t_0 - \delta]$ (see Fig. \ref{fig:sigma_01}), which implies that $\bar{p} = p(0) \in \mathcal{B}^F(0)$. Furthermore, $\forall \tau \in [t_0,t_1-\delta], b^F_{[t_0,t_1]}\left( q(\tau),\tau \right) =  h_T(q(\tau))$, hence forward invariance of $\mathcal{B}^F(t)$, implies that $h_T(q(\tau)) \geq 0$ for some $\tau \in [t_0,t_1-\delta]$, which in turn implies that $h(p(\tau))\geq 0$, owing to the fact that $\mathcal{T}(\cdot)$ is a homeomorphism, concluding the proof.  
\end{pf}

\begin{thm} \label{thm:always}
    Given a predicate $\mu$, its corresponding predicate function $h:\mathcal{W}\rightarrow\mathbb{R}$, the always operator $G_{[t_0,t_1]}\mu$ and the STL-CBF \eqref{eq:always_cbf} for $z = \mathcal{T}(\bar{p}) = \mathcal{T}({p}(t=0))$, if $\exists u \in \mathcal{U}$ such that $\mathcal{B}^G(t) \triangleq \left\{q\in\mathcal{D}|b^G_{[t_0,t_1]}(q,t)\geq 0 \right\}$ is forward invariant, then the predicate is satisfied. 
\end{thm}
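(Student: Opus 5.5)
We argue along the lines of the proof of Theorem \ref{thm:until}: show that the initial state lies in $\mathcal{B}^G(0)$, and then read off $h_T(q(\tau))\geq 0$ on $[t_0,t_1-\delta]$ from forward invariance. First we simplify the modifier. Write $c \triangleq \ln\left(1+\tfrac{h_T(z)}{a}\right)$. The choice of $a$ ensures $0<1+h_T(z)/a<1$, so $c<0$ and $\gamma^G_\delta(z,t) = a e^{ct/t_0} - (a+h_T(z))$. The implicit case of interest is $h_T(z)<0$, i.e., the robot starts outside $\mathcal{H}$; otherwise no $a\geq 0$ satisfies the condition on $a$.

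\emph{Step 1: the initial condition.} At $t=0$ we have $\sigma_{t_1,\delta}(0)=1$, since $0\leq t_1-\delta$. Also $\gamma^G_\delta(z,0)=a-a-h_T(z)=-h_T(z)$. With $z=\mathcal{T}(\bar p)=\bar q=q(0)$ this gives
\[
b^G_{[t_0,t_1]}(\bar q,0)=h_T(\bar q)-h_T(\bar q)=0,
\]
so $\bar q\in\mathcal{B}^G(0)$. This is the reason for choosing $z$ as the initial transformed position.

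\emph{Step 2: the behaviour of $\gamma^G_\delta$ at $t_0$.} At $t=t_0$ we get $\gamma^G_\delta(z,t_0)=a\left(1+h_T(z)/a\right)-a-h_T(z)=0$. For $t>t_0$, since $c<0$, the exponential $e^{ct/t_0}$ is decreasing in $t$. Hence $\gamma^G_\delta(z,t)\leq 0$ for all $t\geq t_0$, and $\gamma^G_\delta(z,t)$ is monotonically increasing from $-h_T(z)>0$ down to $0$ on $[0,t_0]$ (decreasing in value, nonnegative throughout).

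\emph{Step 3: concluding.} For $\tau\in[t_0,t_1-\delta]$ we have $\sigma_{t_1,\delta}(\tau)=1$. Forward invariance then gives $h_T(q(\tau))+\gamma^G_\delta(z,\tau)\geq 0$, and so $h_T(q(\tau))\geq -\gamma^G_\delta(z,\tau)\geq 0$ by Step 2. Through the equivalence $h(p)\geq 0\iff h_T(\mathcal{T}(p))\geq 0$ established after \eqref{eq:pred_disk}, this yields $h(p(\tau))\geq 0$ for all $\tau\in[t_0,t_1-\delta]$. That is the always predicate on the (conservatively shrunk) window, in the same sense as Theorem \ref{thm:until}.

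\emph{Main obstacle.} The delicate point is the sign of $\gamma^G_\delta$ after $t_0$. As literally written, $\gamma^G_\delta$ keeps decreasing below zero for $t>t_0$, which only strengthens the constraint; Step 2 settles this via $c<0$. We must also be careful to state the result on $[t_0,t_1-\delta]$, because on $(t_1-\delta,t_1)$ the factor $\sigma_{t_1,\delta}$ vanishes smoothly and the bound degrades. Finally, the degenerate case $h_T(\bar q)\geq 0$ would be handled separately by taking $\gamma^G_\delta\equiv 0$. Then $b^G=\sigma_{t_1,\delta}h_T$, and invariance directly gives $h_T(q(\tau))\geq 0$ on $[0,t_1-\delta]$.
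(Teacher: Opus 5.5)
Your Steps 1 and 2 reproduce the paper's argument: the choice $z=\bar q$ gives $b^G_{[t_0,t_1]}(\bar q,0)=0$, and $\gamma^G_\delta(z,\tau)\leq 0$ for $\tau\geq t_0$ because $\ln\left(1+h_T(z)/a\right)<0$. (One small slip: $\gamma^G_\delta(z,\cdot)$ is \emph{decreasing} on $[0,t_0]$, as your parenthetical says, not increasing.) The gap is in Step 3. You conclude $h(p(\tau))\geq 0$ only for $\tau\in[t_0,t_1-\delta]$, which proves $G_{[t_0,t_1-\delta]}\mu$. That does not imply $G_{[t_0,t_1]}\mu$. The analogy with Theorem~\ref{thm:until} fails exactly here: the eventually operator is existential, so a witness in a sub-window suffices, whereas the always operator quantifies over the whole window $[t_0,t_1]$. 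Your reason for shrinking the window is also mistaken. On $(t_1-\delta,t_1)$ the switch is $\sigma_{t_1,\delta}(\tau)=\exp\left(-\left[\tfrac{\tau-(t_1-\delta)}{\tau-t_1}\right]^2\right)$, which is strictly positive, so nothing degrades there.

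The missing observation, which the paper's proof uses, is that $\sigma_{t_1,\delta}(\tau)>0$ for every $\tau<t_1$. Hence $b^G_{[t_0,t_1]}(q(\tau),\tau)\geq 0\iff h_T(q(\tau))+\gamma^G_\delta(z,\tau)\geq 0$ on all of $[0,t_1)$. Together with $\gamma^G_\delta(z,\tau)\leq 0$ for $\tau\geq t_0$, this gives $h_T(q(\tau))\geq 0$ for all $\tau\in[t_0,t_1)$. At the single instant $\tau=t_1$ one has $\sigma_{t_1,\delta}(t_1)=0$, so invariance gives no information there. You close the window by continuity of $q(\cdot)$ and closedness of $\mathcal{H}_T$ (equivalently, continuity of the signed distance $h_T$), which gives $q(t_1)\in\mathcal{H}_T$. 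The paper's proof divides by $\sigma$ citing only $\sigma\geq 0$ and so glosses over this endpoint, but its conclusion on $[t_0,t_1]$ is correct once you add the continuity step. With this repair your proof coincides with the paper's. The same correction extends your degenerate-case remark from $[0,t_1-\delta]$ to $[0,t_1]$.
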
 
\begin{pf}
    Let $\bar{p}\in\mathcal{W}$ denote the initial position of the robot, with its transformed equivalent $\bar{q} = \mathcal{T}(\bar{p}) = q(t=0)$. Then, for $t = 0$:
    \begin{equation}
    \begin{split}
            b^G_{[t_0,t_1]}\left( q(0),0 \right) \overset{\eqref{eq:always_cbf}}{=} h_T(q(0))
        - \left( a  + h_T(\bar{q})\right)  +a \cdot\exp\left( 0\right) =0
    \end{split}\notag 
    \end{equation}
    since $\sigma^{t_0,t_1}_{\delta}(\tau) = 1, \forall \tau \in [0,t_0 - \delta]$ (see Fig. \ref{fig:sigma_01}). The above implies that $\bar{q} = q(0) \in \mathcal{B}^G(0)$. Then, $\forall \tau \in [0,t_1]$: 
    {\color{black}\( b^G_{[t_0,t_1]}\left( q(\tau),\tau \right) \geq 0  \overset{\sigma^{t_0,t_1}_{\delta}\geq 0}{\iff} h_T(q(\tau)) + \gamma^G_\delta(z,\tau) \geq 0 \Leftrightarrow 
            h_T(q(\tau)) \geq - \gamma^G_\delta(z,\tau) \geq 0,\)}
     since $\gamma^G_\delta(z,\tau) \leq 0$ by construction $\forall \tau \geq t_0$. Therefore, forward invariance of $\mathcal{B}^G(t)$, implies that $h_T(q(\tau)) \geq 0, \forall \tau \in [t_0,t_1]$, which in turn implies that $h(p(\tau))\geq 0$, owing to the fact that $\mathcal{T}(\cdot)$ is a homeomorphism, concluding the proof.  
\end{pf}

Since the CBFs for the eventually and always operators have been verified, the until operator is also shown to be valid by composing an eventually with an always STL-CBF. Consider the STL-CBF under-approximation of the minimum operator in \cite[Subsec. IIIA]{8404080}. Then, using Thms. \ref{thm:until}, \ref{thm:always} the zero-superlevel set of the former can be shown to be forward time invariant. The proof is omitted for brevity.

\subsection{CBF-based Control Synthesis}
Given a time-dependent CBF and its corresponding zero super-level set \eqref{eq:cbf_set} the latter is shown to be forward-time invariant over the signal (trajectory) $q:\mathbb{R}_{\geq 0}\rightarrow\mathcal{D}$, iff --see \cite{wiltz2025uniformlytimevaryingcontrolbarrier}-- \(\dot{b}(q(t),t) \geq - \alpha\left( b(q(t),t) \right),\)
for some class-$\mathcal{K}$ function $\alpha:\mathbb{R}\rightarrow\mathbb{R}$. Given \eqref{eq:dynamics}, the CBF condition becomes:
\begin{equation}\label{eq:forw_inv:2}
\begin{split}
    \left. \frac{\partial b}{\partial t} \right|_{(q,t)} + 
     \nabla_q^\top b (q,t)\dot{q} \geq  - \alpha\left( b(q,t) \right).
\end{split}
\end{equation}
Consider now a collection of ``simple'' STL predicates of Prob. 1 $\left\{\mu_1,\mu_2,\cdots,\mu_J \right\}$, the corresponding set of predicate functions $\left\{ h_1,h_2,\cdots,h_J\right\}$, and their associated CBF designed as in Sec. \ref{sec:stl_cbfs} denoted as $\{b_1,b_2,\cdots,b_J\}, b_j:\mathcal{D}\times\mathbb{R}_{\geq 0}\rightarrow\mathbb{R}, \forall j \in \{1,2,\cdots,J\}$. In order to enforce the conjunction of the above, \eqref{eq:forw_inv:2} yields:
\begin{equation}\label{eq:forw_inv:mult}
    \begin{split}
        A^\top(q,t) v
        &\geq 
        B(q,t),
    \end{split}
\end{equation}
where:
\begin{equation}\label{eq:A_matrix}
    \begin{split}
    A(q,t) = 
        \begin{bmatrix}
            \nabla_q b_1 (q,t),&
            \cdots, &
            \nabla_q b_J (q,t)
        \end{bmatrix},
    \end{split}
\end{equation}
\begin{equation}\label{eq:B_matrix}
    \begin{split}
    B(q,t) =  
        -\begin{bmatrix}
             \alpha\left( b_1 \left( q,t \right)\right)
            +\sfrac{\partial b_1}{\partial t}\left( q,t \right) \\
            \vdots \\
            \alpha\left( b_J\left( q,t \right) \right)
            +\sfrac{\partial b_J}{\partial t} \left( q,t \right)
        \end{bmatrix}.
    \end{split}
\end{equation}
Through \eqref{eq:v_to_u}, Eq. \eqref{eq:forw_inv:mult} becomes:
\begin{equation}\label{eq:forw_inv:mult:2}
    A^\top (q,t)\mathcal{J}(p)u \geq B(q,t).
\end{equation}
Finally, a \textbf{feasible} input to \eqref{eq:dynamics}, denoted by $k:\mathcal{W}\times \mathbb{R}_{\geq 0}$, that satisfies the predicates $\left\{\mu_1,\mu_2,\cdots,\mu_J \right\}$ can be computed via the solution of a Quadratic Program (QP):
\begin{equation}\label{eq:qp_con}
\begin{gathered}
    k(p,t) = \underset{u\in\mathbb{R}^m}{\arg \min} 
    \left\{ 
        \|u\|^2
    \right\}, \\
    \mathrm{s.t. } 
    \begin{bmatrix}
        A_s^\top \\
        A^\top\left(\mathcal{T}(p),t \right)\mathcal{J}(p)
    \end{bmatrix}u
    \geq 
    \begin{bmatrix}
        B_s \\
        B\left(\mathcal{T}(p),t \right)
    \end{bmatrix},
\end{gathered}
\end{equation}  
where $A_s,B_s$ are given in \eqref{eq:input_constr}.

\subsection{Hybrid Control and Compatibility Check}
The conjunction of multiple STL-CBFs may render \eqref{eq:qp_con} infeasible, which necessitates a planning framework that selects for a subset of the former to be enforced at each time instance. The notion of a \textbf{configuration vector} was proposed by \cite{rousseas2025feasibilityevaluationquadraticprograms}. A configuration is defined as a vector $C\in\{0,1\}^J\triangleq\mathcal{C}$, where a $1$ (resp. $0$) entry in the $j$-th place of $C$ corresponds to the $j$-th STL-CBF constraint --encoded in the $j$-th row of \eqref{eq:forw_inv:mult:2}-- being enforced (resp. dropped). To model this, consider the following modification to \eqref{eq:qp_con}:
\begin{equation}\label{eq:qp_con*}
\begin{gathered}
    \bar{k}(p,t,C) = \underset{u\in\mathbb{R}^m}{\arg \min} 
    \left\{ 
        \|u\|^2
    \right\}, \\
    \mathrm{s.t. } 
    \begin{bmatrix}
        A_s^\top \\
        \textrm{diag}\left(C\right) A^\top\left(\mathcal{T}(p),t \right)\mathcal{J}(p)
    \end{bmatrix}u
    \geq 
    \begin{bmatrix}
        B_s \\
        \textrm{diag}\left(C\right) B\left(\mathcal{T}(p),t \right)
    \end{bmatrix},
\end{gathered}
\tag{\ref{eq:qp_con}*}
\end{equation} 
where $\bar{k}:\mathcal{W}\times \mathbb{R}_{\geq 0}\times \mathcal{C} \rightarrow \mathcal{U}$ and $\textrm{diag}(C)$ is the matrix with $C$'s elements on its diagonal and zeros elsewhere, which serves to nullify the disregarded constraints, as encoded in the configuration vector $C\in\mathcal{C}$.

In order to assess feasibility of \eqref{eq:qp_con}, we employ the point-wise feasibility check by \cite{rousseas2025feasibilityevaluationquadraticprograms}, where it is proven that \eqref{eq:qp_con*} is feasible iff the following Linear Program (LP), denoted by $\mathcal{P}(p,t,C)$ for a given position $p\in\mathcal{W}$, time instance $t\in\mathbb{R}_{\geq 0}$ and configuration $C\in\mathcal{C}$, admits a bounded solution:
\begin{equation}\label{eq:feas_lp}
\begin{gathered}
\mathcal{P}(p,t,C) :
    \underset{\lambda \in \mathbb{R}^{M+J}}{\max}
    \left\{ 
        \begin{bmatrix}
            B_s^\top, &
             B^\top\left(\mathcal{T}(p),t \right)\textrm{diag}\left(C\right)
    \end{bmatrix}\lambda 
    \right\}, \\
    \mathrm{s.t.: }
    \qquad \lambda \geq \mathbf{0}_{(M+J)\times 1}, \\
        \begin{bmatrix}
            A_s^\top, \\
             \textrm{diag}\left(C\right) A^\top\left(\mathcal{T}(p),t \right)\mathcal{J}(p)
        \end{bmatrix}^\top \lambda = \mathbf{0}_{(M+J)\times 1},
\end{gathered}\notag 
\end{equation}
and the following point-wise feasibility check is defined $\mathcal{F}:\mathcal{W}\times\mathbb{R}_{\geq 0}\times \mathcal{C} \rightarrow\{0,1\}$ (\cite{rousseas2025feasibilityevaluationquadraticprograms}):
\begin{equation}\label{eq:feas_check}
    \mathcal{F}\left(p,t,C\right)
    \triangleq
    \begin{cases}
        1, & \textrm{ if } \mathcal{P}(p,t,C) \textrm{ is bounded} \\
        0, & \textrm{ if } \mathcal{P}(p,t,C) \textrm{ is unbounded}
    \end{cases}
\end{equation}
Now, consider the following hybrid system for \eqref{eq:dynamics}:
\begin{equation}\label{eq:dynamics_hybrid}
    \mathcal{S}
    \begin{cases}
        \begin{bmatrix}
            \dot{p} \\
            \dot{C}
        \end{bmatrix} = 
        \begin{bmatrix}
            \bar{k}\left( p,t,C \right)\\
            \mathbf{0}_{J\times 1}
        \end{bmatrix}, 
        &
        \left\{
            p \in \mathcal{W}, C\in \mathcal{C} \left| \mathcal{F}\left(p,t,C\right) = 1\right.
        \right\}
        \\[1.5em]
         \begin{bmatrix}
            {p} ^+\\
            {C}^+
        \end{bmatrix} = 
        \begin{bmatrix}
            p\\
            V
        \end{bmatrix},
        &
         \left\{
            p \in \mathcal{W}, C\in \mathcal{C} \left| \mathcal{F}\left(p,t,C\right) = 0\right.
        \right\}
    \end{cases},
\end{equation}
where $u\in\mathcal{U}$ and $V\in\mathcal{C}$ denotes a virtual input that models configuration selection, i.e., which of the CBF-STL constraints are enforced at position $p\in\mathcal{W}$ and time $t\in\mathbb{R}_{\geq 0}$ if \eqref{eq:qp_con*} is infeasible. The virtual input $V \in \mathcal{C}$ should be designed such that $\mathcal{F}\left(p,t,C^+ \right) = 1$ and such a process is outlined in Subsec. \ref{sec:subsec:conf_search}. 

\subsection{Configuration Search}\label{sec:subsec:conf_search}
The main challenge for controlling system \eqref{eq:dynamics_hybrid} rests on designing the virtual input $V\in\mathcal{C}$ for any $p\in\mathcal{W},t\in\mathbb{R}_{\geq 0}$, i.e., selecting among configurations that render Prob. \eqref{eq:qp_con*} feasible (termed feasible configurations). Note that the configuration space $\mathcal{C}$ is a discrete space with cardinality $|\mathcal{C}| = 2^J$, hence the problem of designing $V\in\mathcal{C}$ boils down to searching over the $2^J$ possible configurations for a feasible one. For instance, one criterion for selecting among feasible configurations, is finding one that maximizes the number of enforced constraints. This is termed the maxFS problem, and is NP-hard --see \cite{Chinneck02102019}--. 
\par 
In this work, a heuristic search is employed based on task prioritization --that may reflect operational, functional, etc. requirements-- where constraints with higher priority are more likely to be enforced. If such a prioritization exists for the predicates $\{\mu_1,\mu_2,\cdots,\mu_J\}$ the corresponding rows and elements of the matrix $A$ \eqref{eq:A_matrix} and the vector $B$ \eqref{eq:B_matrix} in \eqref{eq:forw_inv:mult:2} are ranked based on this priority. For instance, in the absence a pre-specified task ranking,
a reasonable ranking is based on the final time for each predicate; tasks that need to be completed earlier are prioritized over tasks that may elapse later on. This is evidently not a solution to the masFS problem, nor does it guarantee satisfaction of all of the tasks, especially in face of input constraints \eqref{eq:input_constr}. A thorough analysis on such search methods is an open problem and is considered outside the scope of this paper.   
\par 
The proposed search method is outlined in Alg. \ref{alg:conf_search} and is as follows: starting from a configuration $C  = \left[ c _1,c _2,\cdots,c _J\right]^\top\in\mathcal{C}$ we attach a binary number $B  = c_1c_2\cdots c_J$ to the former. Then we search over configurations by iteratively subtracting $1$ from $B $. This has the effect that the least important constraints (corresponding to the right-most digits of $B $) are disregarded first.  
\begin{algorithm}
\caption{{\fontfamily{qcr}\selectfont Configuration Search} }\label{alg:conf_search}
\begin{algorithmic}
\State $\bullet$ \textbf{Input:} configuration $C= \left[ c _1,c _2,\cdots,c _J\right]\in\mathcal{C}$,
\State Given $p\in\mathcal{W},t\in\mathbb{R}_{\geq 0}$,
\State $\bullet$ {\fontfamily{qcr}\selectfont feasible} $\gets$ False,
\While{ \textbf{NOT}({\fontfamily{qcr}\selectfont feasible}) and  \textbf{NOT}$(C = \mathbf{0}_{J\times 1})$}
    \State $\bullet$ Create binary $B  = c _1c _2\cdots c _J$,
    \State $\bullet$ Binary subtraction 
    \(
        B' = B - \underbrace{00\cdots 1}_{J \textrm{ digits}} = c'_1c'_2\cdots c'_J,
    \)
    \State $\bullet$ Create new configuration   
        \( C' = \left[c'_1,c'_2,\cdots,c'_J\right]^\top \),
    \If{$j$-th constraint with $c_j=1$ has not elapsed/been satisfied, $\forall j \in\{1,\cdots,J\}$}
        \State $\bullet$ Feasibility check:
        \begin{equation}
            \textrm{{\fontfamily{qcr}\selectfont feasible}}
            \gets 
            \begin{cases}
                \textrm{False},& \mathcal{F}(p,t,C') = 0 \\
                \textrm{True}, & \mathcal{F}(p,t,C') = 1 
            \end{cases},\notag 
        \end{equation}
    \EndIf
    \State $\bullet$ $C \gets C'$,
\EndWhile
 \State $\bullet$ Return $C$
\end{algorithmic}
\end{algorithm}
Since the proposed serach is ``top-down'', i.e., starting from many enforced constraints and removing them one-by-one, at worst, Alg. \ref{alg:conf_search} will search over all $2^J$ possible configurations. However, this is known to be a combinatorial, NP-hard problem and is the subject of active research (\cite{lee2025constraintselectionoptimizationbasedcontrollers}). The main issue is that, while the configuration $C = \mathbf{0}_{J\times 1}$ is feasible by assumption, a large number of constraints may be disregarded to ensure feasibility, thus requiring many iterations of Alg. \ref{alg:conf_search}. Therefore, a more conservative, but faster approach, which searches ``bottom-up'' may start from $C = \mathbf{0}_{J\times 1}$ and iteratively add constraints until some time interval elapses or a number of feasible constraints are enforced. Furthermore, Alg. \ref{alg:conf_search} is parallelizable. We leave this as part of future work.  

\subsection{Overall Control Scheme}
In this subsection, the control architecture that incorporates the aforementioned elements is outlined in Alg. \ref{alg:control_scheme}. Specifically, the algorithm begins with transforming the workspace and the STL predicate sets onto the disk domain, followed by designing the STL-CBFs on the disk. Then, control of system \eqref{eq:dynamics} proceeds through the CBF-QP controller \eqref{eq:qp_con*} for the hybrid system \eqref{eq:dynamics_hybrid} until either the problem becomes infeasible under the current configuration or until some of the predicates are satisfied/have elapsed. In case infeasibility is detected, the {\fontfamily{qcr}\selectfont Configuration Search} algorithm is employed from the current \textit{infeasible} configuration. Otherwise, if some predicate is satisfied, the {\fontfamily{qcr}\selectfont Configuration Search} algorithm is called to search the full space of configurations, i.e., from the configuration $C = \mathbf{1}_{J \times 1}$, in an effort to enforce as many STL-based constraints as possible. 

We underline that herein, infeasibility is interpreted point-wise, i.e., the enforced constraints/tasks in Eq. \eqref{eq:forw_inv:mult:2}, given by the configuration $C$ in \eqref{eq:forw_inv:mult:2} (which may include all, some or none of the tasks), cannot be simultaneously satisfied. The selected configuration per Alg. \ref{alg:conf_search} ensures that the input constraints are not violated, while satisfying tasks that are both collectively feasible and obey input constraints.  
\begin{algorithm}
\caption{Control Scheme}\label{alg:control_scheme}
\begin{algorithmic}
    \State $\bullet$ Given:
    \begin{enumerate}
        \item  A workspace $\mathcal{W}$ and an initial position $\bar{p}\in\mathcal{W}$, 
        \item  the collection of ``simple'' STL predicates of Prob. 1 $\left\{\mu_1,\mu_2,\cdots,\mu_J \right\}$, 
        \item  the corresponding set of predicate functions $\left\{ h_1,h_2,\cdots,h_J\right\}$, 
        \item  their associated zero super-level sets $\left\{ \mathcal{H}^1, \mathcal{H}^2,\cdots, \mathcal{H}^J \right\}$ \eqref{eq:cbf_tf_zr_lvl_set}, 
    \end{enumerate}
    \State $\bullet$ Obtain the disk transformation \eqref{eq:harm_tf},
    \State $\bullet$ Obtain the STL-CBF boundaries on $\mathcal{D}$, denoted by $\left\{ \partial \mathcal{H}_T^1, \partial \mathcal{H}_T^2,\cdots, \partial \mathcal{H}_T^J \right\}$,
    \State $\bullet$ Design the associated CBFs as in Sec. \ref{sec:stl_cbfs} denoted as $\{b_1,b_2,\cdots,b_J\}, b_j:\mathcal{D}\times\mathbb{R}_{\geq 0}\rightarrow\mathbb{R}, \forall j \in \{1,2,\cdots,J\}$, ranked based on the final times $\{ t_1^1,t_1^2,\cdots,t_1^J\}$ such that $t_1^1 \leq t_1^2 \leq\cdots\leq t_1^J$.  
    \State $\bullet$ {\fontfamily{qcr}\selectfont {\color{black}satsfd}} $\gets$ False,
    \State $\bullet$ {\fontfamily{qcr}\selectfont compatible} $\gets$ False,
    \State $\bullet$ {\fontfamily{qcr}\selectfont {\color{black}satsfd} tasks} $\gets$ 0,
    \State $\bullet$ Starting from $t=0$, $C = \mathbf{1}_{J \times 1}$, 
\While{$t\leq t_1^J$ \textbf{AND} {\fontfamily{qcr}\selectfont ({\color{black}satsfd} tasks)}$<J$}
    \While{{\fontfamily{qcr}\selectfont (compatible)} \textbf{AND} \textbf{NOT}({\fontfamily{qcr}\selectfont {\color{black}satsfd}})}
        \State $\bullet$ Compute matrices in \eqref{eq:forw_inv:mult:2} through \eqref{eq:A_matrix}, \eqref{eq:B_matrix},
        \State $\bullet$ Advance system \eqref{eq:dynamics_hybrid}, through the input \eqref{eq:qp_con*},
             \If{
                \(
                    \mathcal{F}\left(p(t),t,C\right) = 0,
                \)
                }
                \State $\bullet$ {\fontfamily{qcr}\selectfont compatible} $\gets$ False,
            \EndIf
        \State $\bullet$ Check if one of the predicates elapses or is satisfied (resp. has not elapsed, is not satisfied) and set {\fontfamily{qcr}\selectfont {\color{black}satsfd}} to True (resp. False),
    \EndWhile
    \If{\textbf{NOT}({\fontfamily{qcr}\selectfont compatible})}
        \State $\bullet$ Search for compatible configurations {\fontfamily{qcr}\selectfont Configuration Search} from configuration $C$
        \State $\bullet$ {\fontfamily{qcr}\selectfont compatible} $\gets$ True,
    \ElsIf{{\fontfamily{qcr}\selectfont {\color{black}satsfd}}}
        \State $\bullet$ Search for compatible configurations {\fontfamily{qcr}\selectfont Configuration Search} from configuration $C = \mathbf{1}_{J\times 1}$,
        \State $\bullet$ {\fontfamily{qcr}\selectfont {\color{black}satsfd}} $\gets$ False,
        \State $\bullet$ {\fontfamily{qcr}\selectfont {\color{black}satsfd} tasks} $+= 1$,
    \EndIf
\EndWhile
\end{algorithmic}
\end{algorithm}

\section{Simulations}
The simulations were programmed in MATLAB2024a on a laptop running Windows 10 and equipped with an Intel Core Ultra 7 165U CPU. We present two examples, one for unsaturated and one for saturated inputs to \eqref{eq:dynamics}, to drive home the effect of input constraints in satisfying STL predicates. We consider the workspace of Fig. \ref{fig:ws:unsat}, {\color{black}and the formula with 8 predicates:
\begin{equation}\label{eq:stl:formula}
    \begin{split}
        &\phi = 
        G_{[50,300]}\wedge G_{[500,750]} \wedge F_{[200,500]}\wedge F_{[200,1600]}
        \\
        &\wedge F_{[400,2100]}\wedge F_{[1800,2200]}\wedge F_{[500,2800]}\wedge F_{[2500,3500]}.
    \end{split}
\end{equation}
}
For our simulations, all predicates require that the robot reaches circular regions, whose predicate functions are of the form:
\(
    h_i(p) = r_i - \|p - c_i\|, i\in\{1,\cdots,8\},
\)
where $r_i\in\mathbb{R}_{\geq 0}, c_i \in \mathcal{W},\forall i\in\{1,\cdots,8\}$ (although more general forms can also be chosen). 


\subsection{Unconstrained Inputs}
We first investigate the case of unconstrained inputs, i.e., where the matrix $A_s$ and vector $B_s$ in \eqref{eq:input_constr} are empty, hence only the STL-based constraints are enforced. Still, satisfaction of all of the predicates at each time instance is infeasible, as many time windows overlap {\color{black}(see Eq. \ref{eq:stl:formula})} hence the control scheme of Alg. \ref{alg:control_scheme} is required to compute feasible inputs to \eqref{eq:dynamics_hybrid}. The workspace, STL regions and the robot's trajectory are depicted in Fig. \ref{fig:ws:unsat}. The evolution of the predicates and the system's input are depicted in Fig. \ref{fig:pred:unsat}. Our method's (Alg. \ref{alg:control_scheme}) ability to satisfy all of the STL predicates in the absence of input constraints is verified numerically and can be evidenced in Fig. \ref{fig:pred:unsat}, where the predicates reach a value of $0$ within the respective time windows (shaded areas) for the ``eventually'' predicates and remain positive within the windows for the ``always'' ones. 
\begin{figure}[h]
    \centering
    \includegraphics[trim= 1.5cm 0.cm 4.cm 1.35cm,clip,width=1\linewidth]{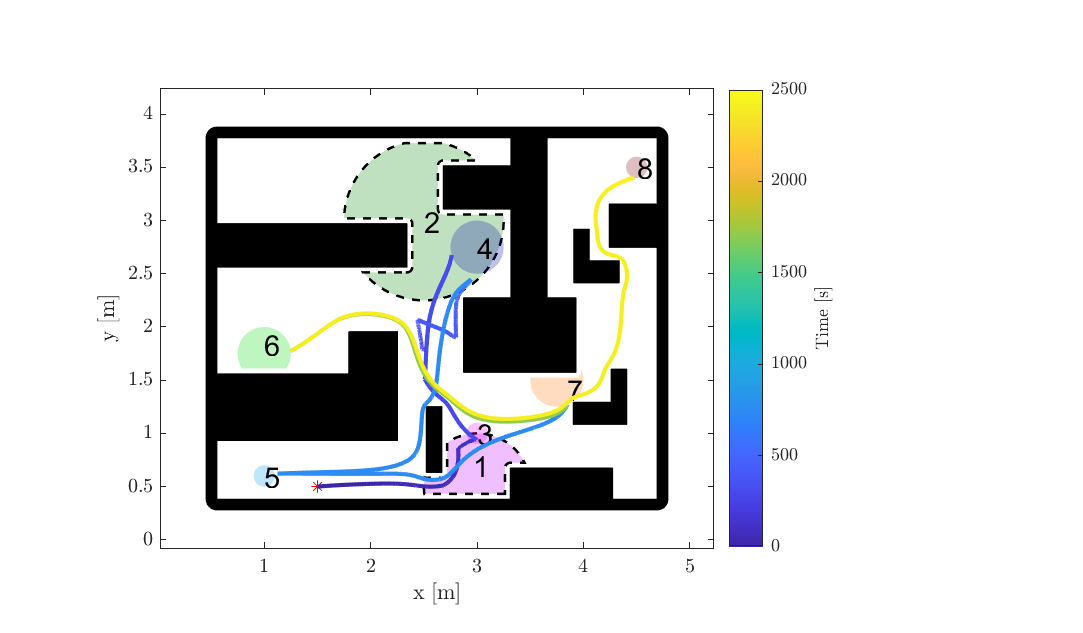}
    \vspace{-0.9cm}
    \caption{{\color{black}\textbf{Proposed Controller.}} Robot workspace (outer boundary and obstacles in black) and numbered STL zero super-level regions (colored regions) of ({\color{black} Eq. \eqref{eq:stl:formula}}) for the unconstrained input case. The robot's trajectory is depicted through the gradient-colored curve, where the color depicts time in seconds.}
    \label{fig:ws:unsat}
\end{figure}


\begin{figure*}[h]
    \centering
    \includegraphics[trim= 1.25cm 0.5cm 1.5cm .5cm,clip,width=0.7\linewidth]{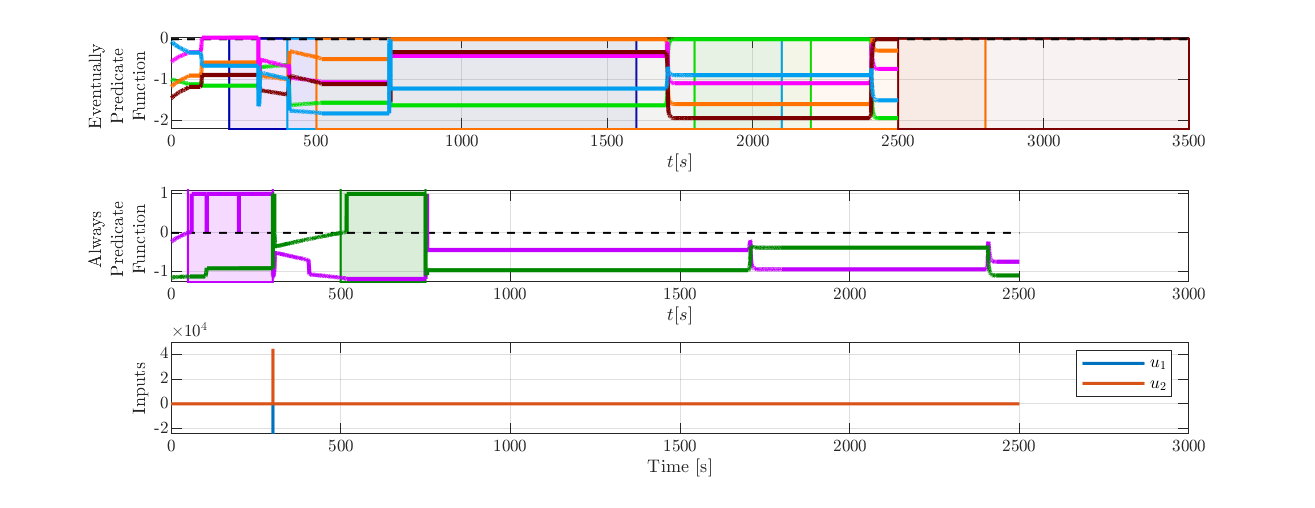}
    \caption{Evolution of the STL predicates for the unconstrained input case of Fig. \ref{fig:ws:unsat}. The eventually (top), always (middle) predicates, as well as the computed input (bottom) are depicted. The colors correspond to the colored regions in Fig. \ref{fig:ws:unsat}, while the coloured regions in the predicate plots depict the time windows for which the predicates are active ({\color{black} Eq. \eqref{eq:stl:formula}}). It is verified that all predicates are satisfied.}
    \label{fig:pred:unsat}
\end{figure*}

\subsection{Constrained Inputs}
We then investigate the case of constraint inputs, more specifically imposing bounds on the input's components $u = \left[u_1,u_2\right]^\top, u_1,u_2 \in \left[ -2, 2\right]$, hence 
$A_s = 
\begin{bmatrix}
    -I_2 & I_2    
\end{bmatrix}$ and 
$B_s = 
\begin{bmatrix}
    -2 & -2 & -2 & -2
\end{bmatrix}^\top$ in \eqref{eq:input_constr}. These input bounds render Prob. 1 harder to solve and compromises the robot's ability to satisfy all of the constraints. The workspace, STL regions and the robot's trajectory are depicted in Fig. \ref{fig:ws:sat}. The evolution of the predicates and the system's input are depicted in Fig. \ref{fig:pred:sat}. Our method is able to satisfy six out of the eight predicates. We underline that the inability of the method to satisfy predicates 2 and 4 does not stem from inherent infeasibility, since our method is not proven to be complete, but rather the homeomorphism to the disk ``distorts'' the geometry of the physical workspace. This is however a meaningful compromise in the context of non-convex workspaces, where direct CBF design on the robot's workspace is difficult. Additionally, the choice of STL-CBF parameters, e.g., $\delta$ in \eqref{eq:eventually_cbf}, \eqref{eq:always_cbf}, the choice of $z$ in \eqref{eq:always_cbf} impacts the CBF conditions in \eqref{eq:A_matrix}, \eqref{eq:B_matrix}. In future works we aim at investigating adaptive techniques for the aforementioned parameters that may render versions of the proposed methodology complete in the face of conflicting specifications. Furthermore, in Figs. \ref{fig:comp_ctr:sat} and \ref{fig:comp_srch:sat} the computational times for solving the CBF-QP \eqref{eq:qp_con*} and Alg. \ref{alg:conf_search} are depicted respectively. Even in the presence of 8 constraints, where $|\mathcal{C}|= 2^8 = 256$, the search takes a mean time of $\approx 0.1[s]$. 
\begin{figure}[h]
    \centering
    \includegraphics[trim= 2.0cm 0.5cm 3cm 1.0cm,clip,width=0.9\linewidth]{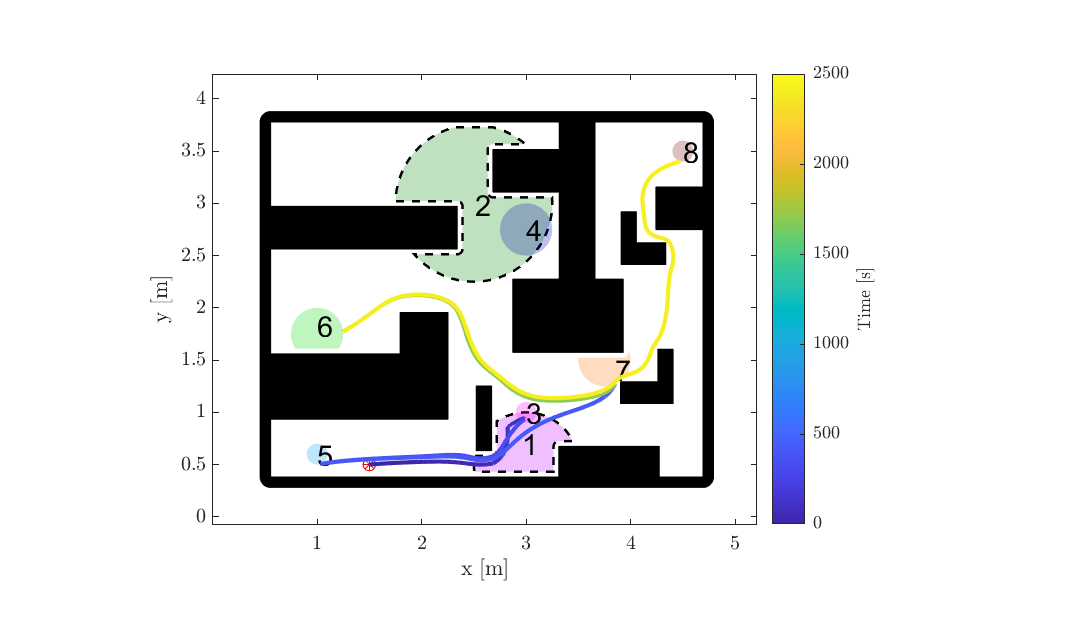}
    \vspace{-0.5cm}
    \caption{{\color{black}\textbf{Proposed Controller.}} Robot workspace (outer boundary and obstacles in black) and numbered STL zero super-level regions (colored regions) {\color{black}of Eq. \eqref{eq:stl:formula}} for the constrained input case. The trajectory is depicted through the gradient-colored curve, where the color depicts time $[s]$.}
    \label{fig:ws:sat}
\end{figure}


\begin{figure*}[h]
    \centering
    \includegraphics[trim=  1.7cm 0.5cm 1.5cm .5cm,clip,width=0.75\linewidth]{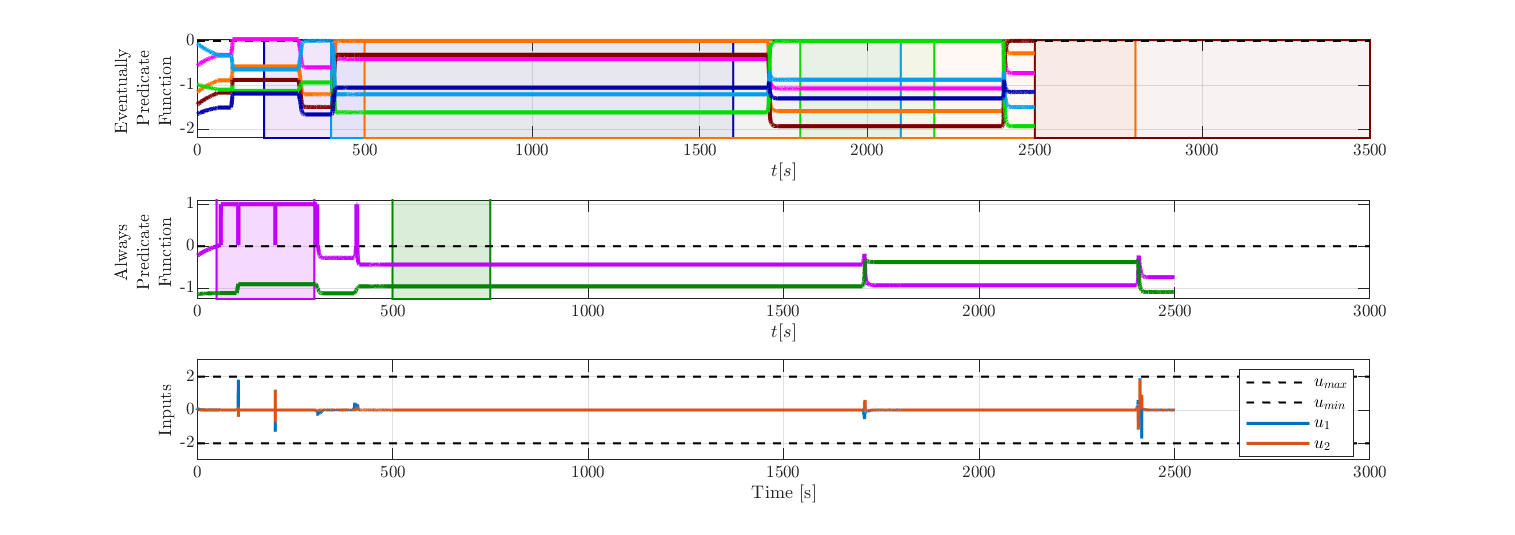}
    \vspace{-0.25cm}
    \caption{Evolution of the STL predicates for the constrained input case of Fig. \ref{fig:ws:sat}. The eventually (top), always (middle) predicates, as well as the computed input (bottom) are depicted. The colors correspond to the colored regions in Fig. \ref{fig:ws:sat}, while the   regions in the predicate plots depict the time windows for which the predicates are active ({\color{black} Eq. \eqref{eq:stl:formula}}). The predicates 1,3,5,6,7,8 are satisfied, while predicates 2 and 4 are unsatisfied.}
    \label{fig:pred:sat}
\end{figure*}

\begin{figure}[h]
    \centering
    \includegraphics[trim=  0.75cm 0.0cm 1.2cm 0.25cm,clip,width=1\linewidth]{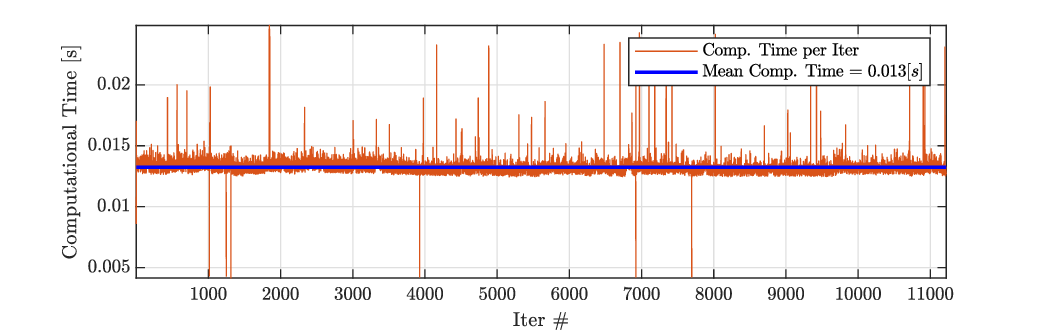}
    \vspace{-0.57cm}
    \caption{Computational times for the CBF-QP controller \eqref{eq:qp_con*}, case with constrained inputs.}
    \label{fig:comp_ctr:sat}
\end{figure}
\begin{figure}[h]
    \centering
    \includegraphics[trim= 1.0cm 0.0cm 1.25cm 0.cm,clip,width=1\linewidth]{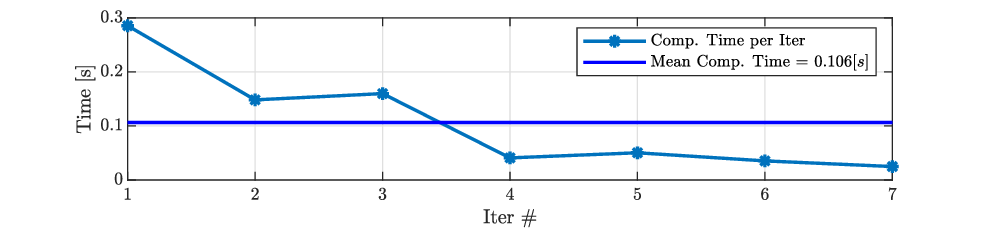}
    \vspace{-0.5cm}
    \caption{Computational times for {\color{black}a serial implementation of} {\fontfamily{qcr}\selectfont Configuration Search}  --Alg. \ref{alg:conf_search}-- for the constrained input case. The algorithm was called 7 times {\color{black}(due to detecting infeasibility)} during the simulation of the robotic task.}
    \label{fig:comp_srch:sat}
\end{figure}

\subsection{Comparative Simulations}
\begin{figure}[h]
    \centering
    \includegraphics[trim= 1.5cm 0.2cm 1.7cm 0.5cm,clip,width=0.8\linewidth]{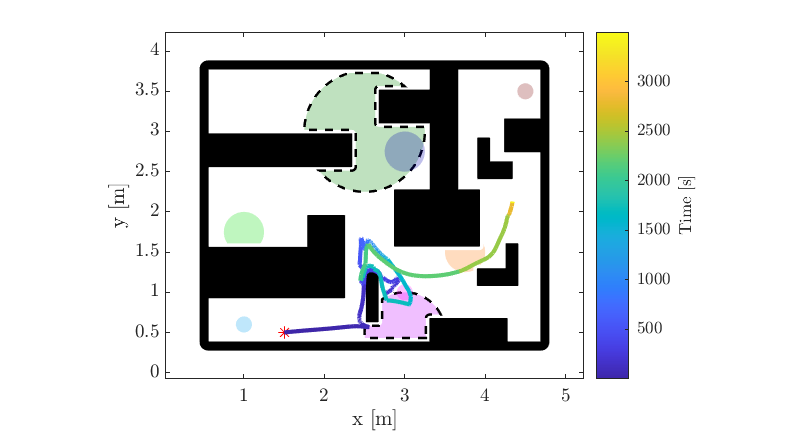}
    \vspace{-0.25cm}
    \caption{{\color{black}\textbf{Benchmark Controller.}} Robot workspace (outer boundary and obstacles in black) and  STL zero super-level regions (colored regions) of ({\color{black} Eq. \eqref{eq:stl:formula}}) for the constrained input case with the method of variable relaxation. The trajectory is depicted through the gradient-colored curve, where the color depicts time.}
    \label{fig:ws:sat:relax}
\end{figure}
\begin{figure}[h]
    \centering
    \includegraphics[trim=  1.05cm 0.0cm 1.5cm 0.25cm,clip,width=1\linewidth]{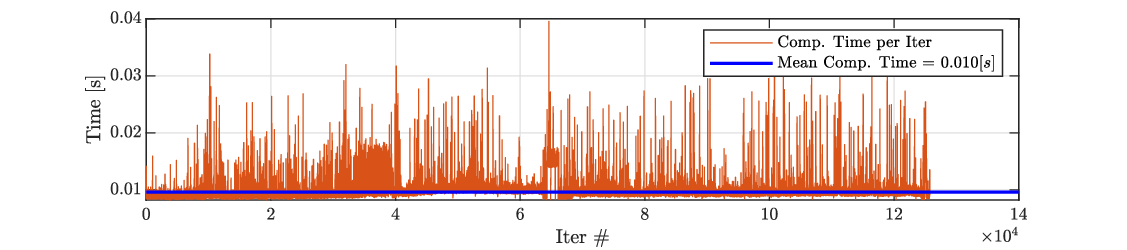}
    \caption{Computational times for the CBF-QP controller with the relaxation variables for the constraints. }
    \label{fig:comp_ctr:sat:relax}
\end{figure}
Our simulation studies conclude with the comparison with a method for relaxing the constraints through relaxation variables on the STL-CBFs, similar to \cite{Chinneck02102019}, depicted in Fig. \ref{fig:ws:sat:relax}, with the corresponding CBF-QP solution times depicted in Fig. \ref{fig:comp_ctr:sat:relax}. Since all of the constraints are considered at once, relaxing the constraints fails to satisfy all predicates besides \#7. This highlights the main advantage of the proposed method, namely that modeling constraint enforcement via the discrete configuration variable, enables ensuring that some STL-CBF constraints will be satisfied, even if the proposed methodology is not complete. At the same time, the proposed method exhibits {\color{black}similar, but slightly increased} computational times for input computation {\color{black} due to having to perform the LP feasibility evaluation online} (see Fig. \ref{fig:comp_ctr:sat:relax} compared to Fig. \ref{fig:comp_ctr:sat}), with the caveat of having to perform the configuration search algorithm.    

\section{Conclusions \& Future Work}
In this work, a novel task and motion planning framework based on STL-CBFs is proposed. Importantly, the methodology tackles arbitrarily complex planar workspaces, while at the same time handling multiple, overlapping tasks, enabled by hybrid control. Therefore, this approach consists a step towards bridging the gap between planning and control. However, investigating more general CBF formulations as well as horizon-planning approaches is necessary to ensure completeness of the method. Furthermore, in the future we aim at considering more complex STL predicates and logical operations, such as disjunctions and nested predicates, which require significant additional effort. {\color{black}Finally, developing mechanisms for predicting and amending infeasibility in face of overlapping tasks and input constraints is a research direction we aim at pursuing.} 

\bibliography{ifacconf}             
\end{document}